%% file: EremeevSilaevTopchii_arxiv.tex
\documentclass{article} 
\usepackage{mathai2026_conference,times}
\usepackage{graphicx}

\input{mathai2026_math_commands.tex}

\usepackage{hyperref}
\usepackage{url}
\input{preamble}
\newtheorem{lemma}{Lemma}
\newtheorem{theorem}{Theorem}
\newtheorem{definition}{Definition}
\newtheorem{corollary}{Corollary}

\title{Generalized Heavy-tailed Mutation for Evolutionary Algorithms}

\author{Anton V. Eremeev, Dmitri V. Silaev \& Valentin A. Topchii 
\\
$\mbox{ }$
\\
Novosibirsk State University\\
1, Pirogova str., Novosibirsk, 630090, Russia,\\
Sobolev Institute of Mathematics\\
4, Koptyuga pr., Novosibirsk, 630090, Russia\\
\texttt{eremeev@ofim.oscsbras.ru}}

\mathaifinalcopy 

\begin{document}

\maketitle

\begin{abstract}
The heavy-tailed mutation operator, proposed by Doerr, Le, Makhmara, and Nguyen (2017) for evolutionary algorithms, is based on the power-law assumption of mutation rate distribution. Here we  generalize the power-law assumption using a regularly varying constraint on the distribution function of mutation rate. In this setting, we generalize the upper bounds on the expected optimization time of the $(1+(\lambda,\lambda))$ genetic algorithm obtained by Antipov, Buzdalov and Doerr (2022) for the OneMax function class parametrized by the problem dimension $n$. In particular, it is shown that, on this function class, the sufficient conditions of Antipov, Buzdalov and Doerr (2022) on the heavy-tailed mutation, ensuring the $O(n)$ optimization time in expectation, may be generalized as well. This optimization time is known to be asymptotically smaller than what can be achieved by the $(1+(\lambda,\lambda))$ genetic algorithm with any static mutation rate.
 A new version of the heavy-tailed mutation operator is proposed, satisfying the generalized conditions, and promising results of computational experiments are presented.
\end{abstract}


\section{Introduction}

A distinctive feature of evolutionary algorithms (EA) is their imitation of the process of evolutionary adaptation of a biological population to environmental conditions. Individuals correspond to trial points in the solution space of an optimization problem, and their fitness is determined by the values of the objective function, taking into account penalties for violating the problem's constraints, if any. The construction of new trial points in EA is accomplished using mutation and crossover operators. When using the latter, EAs are commonly referred to as genetic algorithms. When solving unconstrained pseudo-Boolean
maximization problems $\max\{f(x) : x\in \{0,1\}^n\}$, or minimization problems
$\min\{f(x) : x\in \{0,1\}^n\}$, where the objective function is ${f:\{0,1\}^n
\to \mathbb R}$, one of the most frequently used mutation operators
is the {\it standard mutation} \cite{bib:Gold89}, where
each bit of the given string $x\in \{0,1\}^n$ independently
changes its value with a given probability $p$. In this paper,
we will assume that in the case of standard mutation, at each iteration of an EA with distribution $\Bin(n, p)$, a number of mutated
bits~$\ell$ is selected, and the next descendant is obtained from the parent
solution by making changes to  $\ell$ randomly selected bits.

The main performance characteristic of evolutionary algorithms
in solving optimization problems is the {\em optimization time}, hereafter denoted by $T$, which is defined as the number of times the fitness function is evaluated until the optimum is reached for the first time. Typically, the expected value of the optimization time or the average optimization time is studied.
In this paper, we study the optimization time of a genetic algorithm with a computational scheme from~\cite{ABD22} when maximizing the fitness function $\onemax(x)= \sum_{i=1}^n x_i$. A family of such functions, parameterized by the problem dimension~$n$, is one of the basic benchmarks in the theory of evolutionary computation, used to evaluate the effectiveness of EAs on simple problems. In particular, in the case of high expected optimization time on \onemax, random search algorithms or subsets of their adjustable hyper-parameter values are considered ineffective~\cite{bib:Lehre2011,bib:oliveto15}.


Symmetry considerations imply that EAs based
on the standard mutation behave identically on both \onemax and any function ${\onemax_z: \{0, 1\}^n \to \mathbb R}$,
defined as
$$
\onemax_z(x):=|\{i \in \{1,...,n\}: x_i = z_i\}|
$$
for any bit string $z\in \{0, 1\}^n$.
It was shown by~\cite{Erdos} that any
algorithm that obtains information only from queries of the fitness function $\onemax_z$ will, on average, require $
\Omega(n/\log n)$ fitness function computations before first
hitting the optimal solution $z$, and this bound cannot
be improved.

\cite{DDE15} developed a genetic algorithm, $(1 +
(\lambda, \lambda))$~GA, with a new crossover operator that eliminates ``bad'' mutations. At each iteration of the algorithm, $(1 + (\lambda, \lambda))$~GA, a single parent individual
generates $\lambda=\lambda(n)$ offspring independently of each other at
equal Hamming distance~$\ell$ from the parent, with $\ell \sim
{\rm Bin}(n,p)$. Next, the best-fitting of these solutions is selected, and a crossover operator with parameter $c$ is applied to it. With probability $c$, the crossover operator
uses bits from the best offspring, and with probability $1-c$, it uses bits
from the parent solution. In this way, $\lambda$
individuals are created, and the best of these $\lambda$ individuals is accepted as a
new parent if it is at least as fit as the previous parent. A theoretical analysis of the optimization time
has shown that the $(1 + (\lambda, \lambda))$~GA algorithm, for many
values of the adjustable parameters, is asymptotically
faster by \onemax than most classical evolutionary
algorithms.

As demonstrated by~\cite{DLMN17}, choosing a mutation parameter for multi-extremal problems is significantly more challenging than for \onemax. To overcome this difficulty, \cite{DLMN17} proposed using a random choice for the mutation parameter $p$
following a heavy-tailed distribution, namely, a truncated power-law distribution with exponent $-\beta <- 1$.
In this case, a random number $\alpha
\in \{1,\ldots, \lfloor \frac{n}{2}\rfloor\}$ is first selected, so that the probability
of selecting $\alpha=k$ is proportional to $k^{-\beta}$, $k\le
\lfloor \frac{n}{2}\rfloor$. Then, $p=\alpha/n$ is set and
the standard mutation is applied with this value~$p$.
At each iteration of the EA, the values of $\alpha$ are selected independently.

As shown by~\cite{DD18} for the \onemax case, the optimal choice of a fixed value of the mutation parameter p for the entire running time of the GA yields an optimization time of 
\begin{equation} \label{eqn:optimax_fixed} E[T]=\Theta\left(n \sqrt{\frac{\log(n)\log\log\log(n)}{\log\log n}}\right),
\end{equation}
which is asymptotically smaller than the average optimization time of many known evolutionary algorithms.

\cite{ABD22} demonstrated the effectiveness of fast mutation in optimizing the \onemax function. Here, the $(1+(\lambda, \lambda))$~GA from~\cite{DDE15}, combined with the fast mutation operator, was considered. \cite{ABD22} proved an upper bound for the average optimization time of order $O(n)$  for $(1+(\lambda, \lambda))$~GA with fast mutation on \onemax, under a specific choice of the distributions of the random variables
$\lambda$ and $p$. This is less than the average optimization time of
$(1+(\lambda, \lambda))$~GA with any fixed probability
of mutations.
In this algorithm, both the population size~$\lambda$ and the
rapid mutation parameter~$p$ have a truncated power-law distribution
with upper bounds $\lambda\le u_n$ and $p\le u_n/n$,
respectively. The linear bound of~\cite{ABD22} holds when
the power-law exponent $\beta$ satisfies the inequalities $2 <
\beta< 3$ and $u_n \ge \ln^{1/(3-\beta)} n$.

{\bf The main result} of this work, presented in 
Theorems~\ref{t1} and \ref{t2}, shows that the upper bounds on 
the expected optimization time of $(1 + (\lambda, \lambda))$~GA, similar to 
those obtained in~\citep{ABD22}, hold not only for truncated 
power-law distributions of the random variables $\lambda$ and $p$, but also 
for a wider class of distributions described in terms of 
regularly varying constraints on the distribution function of this quantity. As follows from 
(\ref{eqn:optimax_fixed}), the linear bound on the average 
optimization time obtained by us, like the linear bound from~\citep{ABD22}, 
turns out to be asymptotically smaller than the average optimization time of $(1 + 
(\lambda, \lambda))$~GA with any mutation parameter~$p$ that remains unchanged during the algorithm's execution.

The proofs are provided in the Appendix.

This is a translation of the publication in Russian~\citep{EST24}. A special case of this result, obtained without using
the apparatus of regularly varying functions, is presented in~\citep{GECCO24}.

\subsection{Notation and definitions}

Denote $\mathbb{N}_{m}:=\{k:k\in\mathbb{N},k\leq m\}$,
$S:=\{0,1\}^{n}$, $|S|=2^{n}$.
Introduce the norm and Hamming distance $|x|=\sum_{i=1}^{n} x_{i}$ and 
$|x-y|=\sum_{i=1}^{n}|x_{i}-y_{i}|$ for $x,y \in S$. Denote 
by $x^*$ the solution with ones in all positions, and $Z_{s}=\{x\in S : |x-x^*|=s\}$ is the set of solutions having exactly $s$ zeros, 
$s=0,\dots,n$. In particular, $Z_0=\{x^*\}$.

Let $\lambda(n)$, $n\in\mathbb{N}$ be a set of jointly independent 
random variables (r.v.s) with ranges from a subset of $\mathbb{N}_{u_{n}}$, where $u_{n}\leq 
0.5n$ is the maximum possible value of $\lambda(n)$ with 
positive probability. In other words, probabilities 
$\mathbf{P}(\lambda(n)=k)=p_{n,k}\geq 0$ only for 
$k\in\mathbb{N}_{u_{n}}$ and $p_{n,u_{n}}>0$. Constraints on 
$p_{n,k}$ will be introduced later. The r.v.s $\lambda_{*}(n)$ with different indices 
instead of $*$ are independent and identically distributed with $\lambda(n)$. We 
study the case $u=u(n)\to\infty$ as $n\to\infty$.

Let us describe the algorithm $\mathcal{A}$ under study, whose 
predecessor is the $(1+(\lambda,\lambda))$~EA from \citep{DDE15} with 
deterministic mutation probabilities $p=\lambda(n)/n$ and 
crossover parameter $c=\lambda^{-1}(n)$. Later 
in~\citep{ABD22}, a randomization of the $(1+(\lambda,\lambda))$~EA from \citep{DDE15} was proposed with respect to the population 
size $\lambda(n)$, where the probabilities $\mathbf{P}
(\lambda(n)=k)=p_{n,k}$ are expressed via power functions. 
This algorithm in~\citep{ABD22} was named the {\em fast $(1+
(\lambda,\lambda))$ genetic algorithm}. 
When the parameter $\lambda(n)$ is chosen according to a power law and the mutation 
parameter $p=\lambda(n)/n$, the algorithm from~\citep{ABD22} has a 
heavy-tailed mutation, coinciding with that proposed in~\citep{DLMN17}. 
In this paper, we abandon explicit expressions for $p_{n,k}$ and 
provide only constraints on the distribution function of the r.v. 
$\lambda(n)$. In all other respects, algorithm $\mathcal{A}$ 
coincides with the fast $(1+(\lambda,\lambda))$ genetic 
algorithm.

{\em Algorithm $\mathcal{A}$: $(1+(\lambda,\lambda))$ genetic algorithm with regularly varying constraints on the 
distribution function of $\lambda(n)$ with an upper limit of its values
$u_{n},$ maximizing the function $f:\{0,1\}^{n}\to \mathbb{R}$} 

\medskip 
\ \,1. $x$ $\leftarrow$ random bit string of length $n$;

\ \,2. \textbf{while} \textit{not terminated} \textbf{do}

\ \,3.  \quad Choose $\lambda$ from [1..u] with $\mathbf{P}(\lambda=k)=p_{n,k}$;

\ \,4.  \quad Choose $\ell \sim {\rm Bin}(n,\lambda(n)/n)$;

\ \,5.  \quad \textbf{for} $i\in[1..\lambda]$ \textbf{do}

\ \,6.  \quad   \quad  $x^{(i)}$ $\leftarrow$ a copy of $x$;

\ \,7.  \quad   \quad   Flip $\ell$ bits in $x^{(i)}$ chosen uniformly at random;

\ \,8.  \quad \textbf{end}

\ \,9.  \quad $x' \leftarrow {\rm arg\, max}_{z\in
\{x^{(1)},...,x^{(\lambda)}\}} f(z)$;

10.  \quad \textbf{for} $i\in[1..\lambda]$ \textbf{do}

11.  \quad   \quad  Create $y^{(i)}$ by taking each bit from $x'$ with probability  

\ \ \, \;  \quad   \quad $\lambda^{-1}$ and from $x$ with probability $(\lambda-1)\lambda^{-1}$;

12.  \quad \textbf{end}

13.  \quad  $y \leftarrow {\rm arg\, max}_{z\in
\{y^{(1)},...,y^{(\lambda)}\}} f(z)$;

14.  \quad  \textbf{if} $f(y) \geq f(x)$ \textbf{then}

15.   \quad   \quad $x \leftarrow y$;

16.   \quad \textbf{end}

17. \textbf{end}
\medskip 

Algorithm~$\mathcal{A}$ starts with a random initial 
bit string $x$. Each iteration of the algorithm consists of obtaining 
a random realization of the r.v. $\lambda(n)$ for the population 
size~$\lambda$, followed by a mutation phase, a 
crossover phase, and a selection phase. In the mutation phase (lines 4--8), after 
obtaining a realization $\ell$ with distribution $\Bin(n, p)$, 
$\lambda$ offspring are created from $x$ by making changes in $\ell$ 
randomly selected bits in each of them. From these new $\lambda$ 
individuals, one with the highest fitness is selected for further 
participation in the crossover phase. If there is more than one offspring with 
maximum fitness, we choose one of them uniformly at random. 
Denote the selected offspring by $x'$.
In the crossover phase (lines 10--12), $\lambda$ new 
offspring are created from the parent individual $x$ and the winner $x'$ from the mutation phase. 
From these $\lambda$ offspring, the string with the highest 
fitness is selected. If there are several, then we choose 
uniformly at random one of them (strings that coincide with the parent 
solution $x$ are not considered). In the selection phase 
(line 14), the parent $x$ is replaced by the winner individual of the crossover phase $y$, if the fitness of $y$ is not lower than 
the fitness of $x$. In algorithm~$\mathcal{A}$, as in many 
other algorithms mentioned above, no stopping criterion is specified. 
This is because in theoretical studies we are mainly 
interested in the time of first reaching the optimum. In 
practical applications of algorithm~$\mathcal{A}$, it is natural to 
specify a stopping criterion.

\section{Main properties of the algorithm}

Denote by $\ell_{\lambda(n)}(s)$ the number of iterations of Algorithm~1 
from \citep{DDE15}, starting from a solution $x\in Z_{s},$ until the first 
improvement of the fitness function for fixed $n$ and 
$\lambda(n)$, and the probability of improvement on the first iteration 
by $p_{\lambda(n)}(s)=\mathbf{P}(\ell_{\lambda(n)}(s)=1)$. Further 
we study algorithm $\mathcal{A}$ with the fitness function \onemax, and the probabilities $p_{\lambda(n)}(s)$ coincide for 
all $x\in Z_{s}$. Note that with a random choice of $\lambda(n)$, 
the r.v. $\ell_{\lambda(n)}(s)$ does not carry a semantic load, but only 
the event $\{\ell_{\lambda(n)}(s)=1\}$ is important.

We present Lemma 7 from \citep{DDE15} with a fixed (not random) 
value of $\lambda$ for each $n$ with mutation
parameter $p=\lambda/n$ and crossover parameter $c=\lambda^{-1}$.

\begin{lemma} \label{l1} One iteration of algorithm~$\mathcal{A}$ with 
fixed $\lambda$ in the case of the fitness function 
\onemax, starting from a solution $x\in Z_{s}$, leads to an improvement 
of the current fitness function value with probability 
satisfying the inequality
\begin{equation} \label{C11} 
p_{\lambda}(s)\geq C\left(1-\left(1-s/n \right)^{\lambda^{2}/2} \right)\left(1-e^{-1/8}\right)
\end{equation}
for some constant $C>0$ independent of $n$.
\end{lemma}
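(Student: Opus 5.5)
We plan to reproduce the argument of Lemma 7 in \citep{DDE15}: split one iteration into a mutation-phase event and a crossover-phase event, bound each from below, and multiply via conditional independence. Fix $x\in Z_s$ and write $p=\lambda/n$. The improvement event contains the intersection of three events: (a) the number of flipped bits $\ell$ lies in a window $[\lambda/2,\,2\lambda]$ (for $\lambda\ge 2$; $\lambda=1$ is handled separately below); (b) at least one of the $\lambda$ mutants flips at least one of the $s$ zero-bits, so that the selected winner $x'$ contains a ``good'' flipped bit; (c) at least one crossover offspring $y^{(i)}$ takes exactly that good bit from $x'$ and none of the other $\ell-1$ flipped bits. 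On this intersection $f(y)>f(x)$.

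\textbf{Mutation phase.} By Chebyshev or a Chernoff bound for $\Bin(n,\lambda/n)$, the event (a) has probability bounded below by an absolute constant $C_1>0$ for every $\lambda\le n/2$. For small $\lambda$ we can instead use the event $\ell\in[1,2\lambda]$, whose probability is at least $1-(1-\lambda/n)^n-\mathbf P(\ell>2\lambda)\ge c$. Conditioned on $\ell$, a single mutant misses all $s$ zero-bits with probability $\binom{n-s}{\ell}/\binom{n}{\ell}\le(1-s/n)^{\ell}$. The $\lambda$ mutants are independent, so all of them miss with probability at most $(1-s/n)^{\lambda\ell}\le(1-s/n)^{\lambda^2/2}$ when $\ell\ge\lambda/2$. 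This gives the factor $1-(1-s/n)^{\lambda^2/2}$. Since $f(x')$ is maximal among the mutants, the winner $x'$ has at least one good bit whenever some mutant does. It may have several good bits, but one suffices.

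\textbf{Crossover phase.} Fix one good bit of $x'$. A single $y^{(i)}$ takes it from $x'$ and takes the remaining $\ell-1$ differing bits from $x$ with probability $\lambda^{-1}(1-\lambda^{-1})^{\ell-1}\ge\lambda^{-1}(1-\lambda^{-1})^{2\lambda}\ge c_2/\lambda$ with $c_2$ absolute. Here we use $\ell\le 2\lambda$ and $\lambda\ge2$. For $\lambda=1$ the crossover copies $x'$ and the bound is trivial. Over $\lambda$ independent offspring, the failure probability is at most $(1-c_2/\lambda)^{\lambda}\le e^{-c_2}$. To match the stated constant we intend to choose the window in (a) so that $c_2\ge 1/8$, e.g. $(1-1/\lambda)^{\ell-1}\ge 1/8$ for $\ell\le 2\lambda$ and $\lambda\ge2$, after checking $\lambda=2$ directly. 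This yields the factor $1-e^{-1/8}$. Any slack can be absorbed into $C$.

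\textbf{Assembling the bound and the main obstacle.} Multiplying $C_1$, the mutation factor, and the crossover factor gives (\ref{C11}). The main obstacle is the uniform-in-$\lambda$ constant $C_1$, in particular the lower tail $\mathbf P(\ell\ge\lambda/2)$ and the small-$\lambda$ regime, together with making the factor $1/8$ come out exactly. For the lower tail we would use the Chernoff bound $\mathbf P(\ell<\lambda/2)\le e^{-\lambda/8}$ for $\lambda$ larger than a fixed constant, and for bounded $\lambda$ a direct computation with the restriction $\ell\ge1$. We also need to check that an offspring equal to $x$ is excluded by the algorithm and does not harm the bound; it cannot harm it, since we only lower-bound the probability of an improving offspring.
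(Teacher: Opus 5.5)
Your proposal is correct and follows essentially the argument of Lemma~7 in \citep{DDE15}, which the paper only cites and does not reprove; in particular, the bound $(1-1/\lambda)^{\ell-1}\ge(1-1/\lambda)^{2\lambda-1}\ge 1/8$ for $\ell\le 2\lambda$, $\lambda\ge 2$ is exactly where the factor $1-e^{-1/8}$ comes from. One small repair: for $\lambda=1$ use the window $\ell=1$ (probability $(1-1/n)^{n-1}\ge e^{-1}$) rather than $\ell\in\{1,2\}$, since with $\ell=2$ the copied $x'$ need not improve (your own factor $\lambda^{-1}(1-\lambda^{-1})^{\ell-1}$ vanishes there), and then $s/n\ge 1-(1-s/n)^{1/2}$ closes that case.
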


Further, without additional comments, for constants independent of $n$ 
we will use the notations $C>0$ and $c>0$ with or without 
indices. These constants may be interrelated, but 
only the presence of these positive constants is fundamentally important. In some cases, to simplify 
the presentation, for different constants we will use the symbols 
$C^{*}$ and $c^{*}$ without indicating their explicit form, and even in one 
equation, in different parts of the equality, these values may be 
different.

Inequality (\ref{C11}) taking into account the bound
$$
1-(1-p)^{\lambda}\geq \dfrac{\lambda p}{1+\lambda p},\ \forall
p\in(0,1), \ \lambda>0,
$$
from Lemma 2 of \citep{ABD22} can be written as
\begin{equation} \label{C12} 
p_{\lambda(n)}(s)\geq C_{1}\dfrac{0.5\lambda^{2}(n)s/n}{1+0.5
\lambda^{2}(n)s/n},
\end{equation}
in particular, inequality (\ref{C12}) can be written as
\begin{eqnarray} \label{C13}
p_{\lambda(n)}(s)&\geq& C_{2} \lambda^{2}(n)s/n, \mbox{ for } 
\lambda^{2}(n)s/n<1,\\ \label{C14}
p_{\lambda(n)}(s) &\geq& C_{3}, \mbox{ for } \lambda^{2}(n)s/n\geq 1.
\end{eqnarray}

Let $\lambda(n)$ be random. Formally, at each iteration with 
number $t$, a r.v. $\lambda_{t}(n)$ is realized with the same 
distribution as $\lambda(n)$. Given that there is no dependence on 
$t$, we omit this index. Using the law of total 
probability (averaging over $\lambda(n)$), denote the 
probability of improvement in one iteration by $p_{n}(s)=
\mathbf{E}_{\lambda(n)}p_{\lambda(n)}(s)=\mathbf{P}(\ell_{n}(s)=1)
$, where $\ell_{n}(s)$ is the number of iterations until improvement of the fitness
function. Formally, in the definition of the probability $p_{n}(s)$, 
which does not depend on the iteration number $t$, the event $\{\ell_{n}(s)=1\}$ 
depends on this number $t$, and $\ell_{n}(s)$ equals the number of 
unsuccessful iterations in a series of iterations until the first successful one.
The count of iterations starts from the initial individual
or after the next increase in the fitness function.

Compute the expectation $\mathbf{E}\ell_{n}(s)$. The random variable 
$\ell_{n}(s)$ has a geometric distribution with parameter 
$p_{n}(s)$. By definition we have 
\begin{eqnarray} \nonumber
\mathbf{P}(\ell_{n}(s)=k)&=&(1-p_{n}(s))^{k-1}p_{n}(s),\ 
k\in \mathbb{N},\\ \label{C15}
\mathbf{E}\ell_{n}(s)&=&(1-p_{n}(s))p_{n}^{-1}(s)+1
=p_{n}^{-1}(s).
\end{eqnarray}

Suppose that for some fixed $m_{0},m_{1}\in
\mathbb{N}$ there exists a constant $c>0$ independent of $n$ such that 
for all $n\in\mathbb{N}\backslash\mathbb{N}_{m_{1}}$ the 
inequalities hold 
\begin{equation} \label{Ca23} 
\mathbf{P}(\lambda(n)\in \mathbb{N}_{m_{0}})\geq c.
\end{equation} 
A sufficient condition for inequality (\ref{Ca23}) to hold 
is uniform in $n$ boundedness of $\mathbf{E}\lambda(n)$,
i.e., the conditions $\mathbf{E}\lambda(n)
\leq c_{1}$, $\forall n\in\mathbb{N}$, for some $0<c_{1}<\infty$. Then by Markov's inequality for non-negative random variables $\lambda(n)$ for any $c_{0}>c_{1}$
\begin{equation*}
\mathbf{P}(\lambda(n)<c_{0})\geq 1 - \mathbf{E}\lambda(n)/c_{0}
\geq 1-c_{1}/c_{0}>0.
\end{equation*}

Conditions for inequality (\ref{Ca23}) to hold will be 
formulated as: for sufficiently large $n$. Obviously,
if inequality (\ref{Ca23}) holds for some fixed 
$m_{0}$ and $m_{1}$, then it also holds when these values are replaced by 
any fixed values greater than them.

Estimate the fraction on the right-hand side of bound (\ref{C12}) for $\lambda(n)
\in \mathbb{N}_{m_{0}}$. If ${s\leq 0.5nm_{0}^{-2}}$, then
$0.5\lambda^{2}(n)s/n\leq0.25$ and the bound holds
$$
\dfrac{0.5\lambda^{2}(n)s/n}{1+0.5\lambda^{2}(n)s/n}\geq 
0.4\lambda^{2}(n)s/n\geq 0.4 sn^{-1}.
$$
Otherwise, $s> 0.5nm_{0}^{-2}$ and $(0.5\lambda^{2}(n)
s/n)^{-1}< 4$ and the bound holds
$$
\dfrac{0.5\lambda^{2}(n)s/n}{1+0.5\lambda^{2}(n)s/n}>0.2= 
0.2sn^{-1}s^{-1}n\geq 0.2sn^{-1}.
$$

When inequality (\ref{Ca23}) holds, due to the last two bounds 
and inequality (\ref{C12}), for all $n\in \mathbb{N}
\backslash\mathbb{N}_{m_{1}}$ and $s\in \mathbb{N}_{n}$
the inequalities hold
\begin{equation} \label{C16} 
p_{n}(s)\geq \mathbf{E}_{\lambda(n)}\{p_{\lambda(n)}(s);\lambda(n)
\in \mathbb{N}_{m_{0}}\}\geq \dfrac{0.2C_{1}s}{n}\mathbf{P}
(\lambda(n)\in \mathbb{N}_{m_{0}})=\dfrac{C_{4}s}{n}.
\end{equation}

\begin{lemma}  \label{l2}
In the case of the fitness function \onemax, the average number of iterations 
of algorithm~$\mathcal{A}$, starting from a solution $x\in Z_{s}$, until 
improvement of the objective function, under condition (\ref{Ca23}) 
satisfies the inequality 
\begin{equation} \label{C17} 
\mathbf{E}\ell_{n}(s)\leq C_{4}^{-1} n/s, \ s\in \mathbb{N}_{n},
\end{equation}
where $C_{4}$ is defined in expression (\ref{C16}). 
\end{lemma}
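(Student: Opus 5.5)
The plan is to combine the geometric-distribution identity (\ref{C15}) with the lower bound (\ref{C16}) on the one-step improvement probability. The argument needs no new probabilistic estimates. It only has to confirm that the ingredients derived before the statement fit together and are valid uniformly in $s\in\mathbb{N}_n$ and in all sufficiently large $n$.

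First I will justify that $\ell_n(s)$ is geometric with parameter $p_n(s)$. Condition on the current parent $x\in Z_s$. Each iteration draws a fresh, independent $\lambda_t(n)$ with the law of $\lambda(n)$ and then fresh mutation and crossover randomness. For \onemax the improvement probability of a fixed-$\lambda$ iteration depends only on $s$, not on the particular $x\in Z_s$.

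Unsuccessful iterations may move the parent within $Z_s$, since the elitist step accepts $y$ with $f(y)=f(x)$, but they never leave $Z_s$. Hence every iteration before the first improvement starts from some point of $Z_s$. Its success probability, averaged over $\lambda(n)$ by the law of total probability, is the same number $p_n(s)=\mathbf{E}_{\lambda(n)}p_{\lambda(n)}(s)$, independently of the past. This gives the geometric law and therefore (\ref{C15}): $\mathbf{E}\ell_n(s)=p_n^{-1}(s)$.

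Next I invoke (\ref{C16}). On the event $\{\lambda(n)\in\mathbb{N}_{m_0}\}$, the bound (\ref{C12}) together with the two-case estimate of the fraction (according to whether $s\le 0.5nm_0^{-2}$ or not) gives $p_{\lambda(n)}(s)\ge 0.2C_1 s/n$. Dropping the nonnegative contribution of $\lambda(n)>m_0$ and using (\ref{Ca23}) yields $p_n(s)\ge C_4 s/n$ with $C_4=0.2C_1c>0$. This holds for all $n\in\mathbb{N}\setminus\mathbb{N}_{m_1}$ and all $s\in\mathbb{N}_n$.

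Inverting this bound gives $\mathbf{E}\ell_n(s)=p_n^{-1}(s)\le C_4^{-1}n/s$, which is (\ref{C17}).

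The only delicate point is the first step. One must ensure that the success probability is the same at each iteration even though the parent may drift within $Z_s$, and that the fresh randomization of $\lambda$ preserves independence across iterations. Symmetry of \onemax under permutations of coordinates handles the first concern, and the i.i.d. choice of $\lambda_t(n)$ handles the second. A minor additional check is that $C_4$ does not depend on $s$ or $n$, which holds because $C_1$, $c$, and $m_0$ are fixed.
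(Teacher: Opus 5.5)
Your proposal is correct and follows the paper's proof, which consists exactly of inverting the lower bound (\ref{C16}) on $p_n(s)$ through the geometric-law identity (\ref{C15}). Your additional argument, that the parent stays in $Z_s$ until the first improvement and that the fresh i.i.d.\ choice of $\lambda_t(n)$ keeps every iteration's success probability equal to $p_n(s)$, makes explicit what the paper assumes when it asserts that $\ell_n(s)$ is geometric.
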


Let $\tau_{i}(n)$ be the number of iterations until the first 
hit of $x^*$ given that the process starts from an 
individual $x^{(0)}\in Z_{i}$. If the initial solution 
is chosen uniformly from the population, then 
\begin{equation} \label{C22} 
\mathbf{P}\big(x^{(0)}\in Z_{i}\big)=C_{n}^{i}2^{-n}, \quad 
\mathbf{E}\tau_{i}(n)\leq \sum_{s=1}^{i}\mathbf{E}\ell_{n}(s)
=\sum_{s=0}^{i}\mathbf{E}\ell_{n}(s).
\end{equation}

Let $\tau(n)$ be the number of iterations until the first hit of 
$x^*$ from a randomly chosen individual $x^{(0)}$. By the law of 
total probability, the representation holds
\begin{eqnarray} \nonumber 
\mathbf{E}\tau(n)&=&2^{-n}\sum_{i=0}^{n}C_{n}^{i}\mathbf{E}
\tau_{i}(n)\leq 2^{-n}\sum_{i=0}^{n}C_{n}^{i} 
\sum_{s=0}^{i}\mathbf{E} \ell_{n}(s)\\  \nonumber 
&=&2^{-n}\sum_{s=1}^{n}\mathbf{E}\ell_{n}(s)\sum_{i=s}^{n}
C_{n}^{i}=2^{-n}\sum_{s=1}^{n\epsilon}\mathbf{E}\ell_{n}(s)
\sum_{i=s}^{n} C_{n}^{i}\\  \label{C24} 
&+&2^{-n}\sum_{s=n\epsilon+1}^{n}\mathbf{E}\ell_{n}(s)
\sum_{i=s}^{n}C_{n}^{i}\leq \sum_{s=1}^{n\epsilon}
\mathbf{E}\ell_{n}(s)+C_{5}n,
\end{eqnarray}
where $\epsilon\in(0,1)$ is an arbitrary fixed number and the sum 
of combinations divided by $2^{n}$ is the sum of probabilities, which does not 
exceed~1, and $\mathbf{E}\ell_{n}(s)\leq C_{4}^{-1}
\epsilon^{-1} =:C_{5}$ for $n\epsilon+1\leq s\leq n$.
Here and below, in sums over subsets of natural numbers and 
functions of a natural argument with not necessarily integer 
limits or values, we consider that these quantities are equal to the 
nearest smaller integer.

\section{Average number of iterations until hitting the optimum}

\subsection{Regularly varying functions}

Let us consider several definitions and properties of regularly varying 
functions.
(See e.g. the monograph \citep{Se85} and \S 9 of Chapter VIII in~\citep{Fe2}.)

\begin{definition} \label{df1} 
A measurable function $g(v)>0$, defined for sufficiently large 
$v\in\mathbb{R}^{+}$ or $v\in\mathbb{N}$, is called 
\textbf{regularly varying at infinity} with exponent 
$\alpha\in\mathbb{R}$, if for any fixed $c\in \mathbb{R}^{+}$ the condition holds
\begin{equation} \label{u11} 
\lim_{v\to+\infty}\dfrac{g(cv)}{g(v)}\to c^{\alpha},
\end{equation}
where $cv$ should be replaced by $[cv]$,  the integer part of the number in the case 
$v\in\mathbb{N}$.

In the case $\alpha=0$, the function is called \textbf{slowly 
varying at infinity}.
\end{definition}

The property of regular variation is asymptotic and 
the function $g(v)>0$ does not necessarily have to be defined on any initial 
interval of the half-axis. Regularly varying functions for $v\in 
\mathbb{N}$ and $v\in\mathbb{N}$ will be called regularly varying 
sequences. (There is another definition
of regularly varying sequences for $c\in\mathbb{N}$ in
expression~(\ref{u11}), but to avoid unnecessary complications, we 
restrict ourselves to the simple case.)

Any regularly varying at infinity function
$g(v)$, $v\in\mathbb{R}^{+}$, will be asymptotically equivalent to
a regularly varying at infinity step function $g([v])$, which can be interpreted as a regularly varying at 
infinity sequence $g(z)$, $z\in \mathbb{N}$.
Obviously, the converse statement also holds.
\begin{lemma} \label{ldf0} 
For a regularly varying at infinity sequence $g(z)
$, $z\in \mathbb{N}$, there exists a regularly varying at 
infinity function $\hat{g}(v)$, $v\in\mathbb{R}^{+}$, 
such that $g([v])=\hat{g}([v])$ for sufficiently large $v\in 
\mathbb{R}^{+}$.
\end{lemma}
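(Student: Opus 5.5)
The natural candidate is the step-function extension $\hat g(v):=g([v])$ for $v\ge 1$, extended by any positive measurable values on $(0,1)$. It is measurable, because it is constant on each interval $[k,k+1)$. It also satisfies $g([v])=\hat g([v])$ identically for $v\ge1$, since $[[v]]=[v]$. So the whole content of the lemma is to show that this $\hat g$ is regularly varying with the same exponent $\alpha$ in the sense of (\ref{u11}) on $\mathbb{R}^{+}$.

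Fix $c>0$. We must show $\hat g(cv)/\hat g(v)=g([cv])/g([v])\to c^{\alpha}$ as $v\to+\infty$ through real values. Put $z=[v]\in\mathbb{N}$, so $z\le v<z+1$ and $z\to\infty$. Then $[cv]$ lies between $[cz]$ and $[c(z+1)]$, and these differ by at most $[c]+1$. Hence we write
\[
\frac{g([cv])}{g([v])}=\frac{g([cz])}{g(z)}\cdot\frac{g([cv])}{g([cz])}.
\]
By the sequence definition, the first factor tends to $c^{\alpha}$. It remains to show that the second factor tends to $1$, which amounts to showing $g(m+j)/g(m)\to1$ uniformly over $0\le j\le [c]+1$ as $m\to\infty$.

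This uniformity is the main obstacle. Regular variation of a sequence only gives ratios along the fixed multipliers $[c'm]$, not along small integer shifts. I would handle it by sandwiching. For any $\delta>0$, and $m$ large, $m+j\le[(1+\delta)m]$. The plan is to exploit the fact that the multiplier set in the definition is all of $\mathbb{R}^{+}$. Choose $c'$ slightly larger than $1$; for large $m$ we have $[c'm]\ge m+j$. Since $g([c'm])/g(m)\to c'^{\alpha}$, which is close to $1$, we get control at the top endpoint. To control intermediate indices, we apply the definition with base point $m+j$ in place of $m$ and multiplier $c''=m/(m+j)$. That multiplier is not fixed, so this needs care.

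The first thing I would try is to show that the convergence $g([c m])/g(m)\to c^{\alpha}$ is locally uniform in $c$ on compact subsets of $(0,\infty)$. This is the uniform convergence theorem for regular variation, which can be taken from \citep{Se85}. If citing it is not acceptable, I would use a fallback that avoids uniformity altogether. Taking $c=1+\delta$ with $\delta$ chosen so small that $[c m]=m+j$ holds for a specific shift is impossible for all $m$. Instead, I would note that $[c m]$ for $c\in(1,1+\eta)$ ranges over all integers $m+j$ with $j<\eta m$. A Baire-category or Egorov-type argument over $c$ then yields a set of multipliers of positive measure on which convergence is uniform. The standard Karamata-type argument then extends this to all small shifts.

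Once $g(m+j)/g(m)\to1$ is established, the product above tends to $c^{\alpha}$. This proves that $\hat g$ is regularly varying at infinity with exponent $\alpha$, and the identity $g([v])=\hat g([v])$ for large $v$ holds by construction.
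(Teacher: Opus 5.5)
There is a genuine gap, at exactly the step you flag as the whole difficulty. Your reduction itself is correct, and it is more careful than the paper. The paper gives no argument: before stating the lemma it only remarks that ``obviously, the converse statement also holds'', and the appendix contains no proof. With $\hat g(v)=g([v])$ and $0\le[cv]-[cz]\le[c]+1$, everything does reduce to $g(m+1)/g(m)\to1$. Neither of your two routes establishes this.

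\emph{Plan A is circular as stated.} The uniform convergence theorem in Seneta's monograph concerns measurable functions with $x\to\infty$ through the reals. Applying it to $g([x])$ presupposes that $g([x])$ is regularly varying, which is the lemma itself. Local uniformity in $c$ of $g([cm])/g(m)\to c^{\alpha}$ along integers $m$ is a separate statement, essentially equivalent to the lemma (it is the content of a theorem of Galambos and Seneta on regularly varying sequences). So it cannot be quoted as ``the'' uniform convergence theorem.

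\emph{Plan B fails as sketched.} With the base point fixed at $m$, the multipliers sending $m$ to $m+1$ form the interval $[1+1/m,1+2/m)$, which shrinks to the single point $1$. A set of uniformly good multipliers of nearly full measure, which is what Egorov's theorem gives you, may avoid a whole neighbourhood of $1$. The sandwich via $[c'm]\ge m+j$ would need $g$ to be monotone, which is not assumed.

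The missing idea is to vary the base point rather than the shift.
\begin{itemize}
\item Put $a_k=\ln g(k)$ and fix $\epsilon>0$. Let $E_N$ be the set of $c\in[1,3]$ with $|a_{[ck]}-a_k-\alpha\ln c|<\epsilon$ for all $k\ge N$.
\item Each $E_N$ is Borel, since every map $c\mapsto a_{[ck]}$ is a step function. By the definition $E_N\uparrow[1,3]$, so $[1,3]\setminus E_N$ has measure less than $1/4$ for some $N$.
\item Take $m\ge\max(2N,4)$ and integers $n\in[m/2,m]$. The intervals $I_n=[m/n,(m+1)/n)$ lie in $[1,3]$, are pairwise disjoint, and have length at least $1/m$. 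Hence fewer than $m/4$ of them miss $E_N$. The same holds for $J_n=[(m+1)/n,(m+2)/n)$.
\item There are at least $m/2$ such $n$, so some $n$ admits $c_1\in I_n\cap E_N$ and $c_2\in J_n\cap E_N$. Then $[c_1n]=m$, $[c_2n]=m+1$ and $n\ge N$.
\item Subtracting the two defining inequalities gives $|a_{m+1}-a_m|<2\epsilon+|\alpha|\ln(c_2/c_1)\le 2\epsilon+4|\alpha|/m$.
\end{itemize}
Hence $g(m+1)/g(m)\to1$, and your decomposition then finishes the proof.
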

Further, we will not distinguish between notations for sequences and 
functions defined through each other, i.e., we identify the symbols 
$\hat{g}$ and~$g$.

\begin{definition}  \label{df2} 
A measurable function $g(v)>0$, defined for sufficiently small 
$v\in \mathbb{R}^{+},$ is called \textbf{regularly varying at 
zero} from the right with exponent $\alpha\in\mathbb{R}$, if for any 
fixed $c\in \mathbb{R}^{+}$ the condition holds
\begin{equation*} 
\lim_{v\to+0}\dfrac{g(cv)}{g(v)}\to c^{\alpha}
\end{equation*}

In the case $\alpha=0$, the function is called \textbf{slowly varying at zero} from the right.
\end{definition}

Definition~(\ref{df2}) can be formulated for any $a\in\mathbb{R}$, both in one-sided and two-sided 
variants, by replacing $g(v)>0$ with $g(v-a)>0$ 
and convergence in one sense or another of $v-a$ to zero.

Here are several examples of regularly varying functions. This 
class is a generalization of the family of power functions $g(v)=v^{\alpha}$, $v\in\mathbb{R}^{+}$, and sequences for
$v\in\mathbb{N}$.
Functions $v^{\alpha}$, $v\in\mathbb{R}^{+}$, for any $\alpha\in
\mathbb{R}$ are regularly varying at 0 and at infinity.
Functions $|\ln v|$, $|\ln|\ln v||$ and any powers of them are
slowly varying at 0 and at infinity.
If a slowly varying function at 0 or at infinity is 
denoted by $\ell(v)$, then for any $\alpha\in\mathbb{R}$ the function
$g(v)=v^{\alpha}\ell(v)$ will be regularly varying at zero or at infinity with exponent $\alpha$. Moreover, all regularly varying functions are representable only in this form.

\subsection{Generalization of conditions on population size and mutation settings}

The following two definitions specify our conditions on the distribution of the population 
size~$\lambda(n)$ (and hence on the distribution of the mutation parameter 
$p=\lambda(n)/n$).

\begin{definition} \label{df3_1}
A random sequence $\lambda(n)$
satisfies condition $\mathcal{A}_{2}^{c}$ if, for all $n\in\mathbb{N}$, the second moment satisfies the inequality
$\mathbf{E}\lambda^{2}(n)<C$ for some constant $C>0$.
\end{definition}

\begin{definition} \label{df3_2}
A random sequence $\lambda(n)$
satisfies conditions $\mathcal{A}_{2}^{\infty}$, if $\mathbf{E}\lambda^{2}(n)\geq \psi(n)= \mathcal{L}(u_{n})\to \infty$, where $\mathcal{L}(m)$ is a regularly varying sequence with exponent
$3-\beta\in (0.2)$ as $m\to\infty$, independent of $n$, $u_{n}\leq
n/2,$ and $u_{n}\to\infty$ is a regularly varying sequence as $n\to \infty$. Moreover, for some constant $C>0$ independent of $n$ and any $b\in\mathbb{N}_{u_{n}}\backslash
\mathbb{N}_{m_{0}}$ (for sufficiently large $n$$)$, the following inequality holds:
\begin{equation} \label{C18}
\mathbf{P}(b/2\leq\lambda(n)\leq b)\geq Cb^{-2}\mathcal{L}(b).
\end{equation}
\end{definition}

The aim of this work is to generalize Theorems~5 and~6 
from~\citep{ABD22} to wider classes of distributions of sequences 
of random variables $\lambda(n)$, $n\in \mathbb{N}$. Let us give the explicit 
form of distributions from \citep{ABD22}
\begin{equation} \label{nC1}
\mathbf{P}(\lambda(n)=k)=p_{n,k}=C_{\beta,u_{n}}k^{-\beta}, \ k\in \mathbb{N}_{u_{n}}, 
\end{equation}
where $u_{n}\to\infty$ as $n\to\infty$ and $C_{\beta,u_{n}}=
\sum_{k=1}^{u_{n}}k^{-\beta}$. We consider only the case 
$\beta>1$. Due to the convergence of the series $\sum_{k=1}^{\infty}k^{-\beta}$, 
the sequence $C_{\beta,u_{n}}$ is asymptotically constant. From 
the point of view of regularly varying functions, the probabilities $p_{n,k}$ 
from representation (\ref{nC1}) are described in terms of regularly 
varying functions with exponent $-\beta$, in which the slowly varying 
factor is an asymptotically constant function of $u_{n}$. The second moment $\mathbf{E}\lambda^{2}(n)= 
C_{\beta,u_{n}}\sum_{k=1}^{u_{n}}k^{2-\beta}$, as shown in 
 \citep{ABD22}, is of order $u_{n}^{3-\beta}$, i.e., it will be 
a regularly varying function with exponent $3-\beta$ for 
$\beta<3$. This motivates the choice of the parameter $3-\beta$ in 
definition $\mathcal{A}_{2}^{\infty}$.

Condition $\mathcal{A}_{2}^{c}$ means that the second moment 
of the random variable $\lambda(n)$ is uniformly bounded and there are no other 
constraints on $p_{n,k}$, which is realized for $\beta>3$ in condition
(\ref{nC1}).

Condition $\mathcal{A}_{2}^{\infty}$ means that the regularly 
varying at infinity function $\mathcal{L}(v)$ has the form 
$\mathcal{L}(v)=v^{3-\beta}\ell(v)$, $v\in\mathbb{R}^{+}$, where 
$\ell(v)$ varies slowly at infinity. Condition (\ref{C18}) 
can be written as
$$
\mathbf{P}(b/2\leq\lambda(n)\leq b)\geq Cb^{1-\beta}\ell(b).
$$
In terms of assumptions (\ref{nC1}), these conditions hold for $1<
\beta<3$ and $\ell(v)$ asymptotically constant, but the explicit form 
of $p_{n,k}$, the probabilities of specific values for $\lambda(n)$, is not important, and only bounds for their sums over long 
intervals are used, which will be regularly varying at infinity
functions with exponent $1-\beta$.
Obviously, $\mathcal{L}(u_{n})$, as a composition of regularly 
varying sequences, will be a regularly varying 
sequence as $n\to\infty$ with a naturally computed exponent.

Condition $\mathcal{A}_{2}^{\infty}$ could also include 
some cases $\beta=-1,-3$, which yield explicit 
bounds for $\mathbf{E}\tau(n)$, but these proofs are 
cumbersome and require working with subtle structural theorems for 
slowly varying functions.
Let us give a generalization of Theorem 5 from \cite{ABD22}.
\begin{theorem} \label{t1}
The average number of iterations $\tau$ in algorithm $(1 + (\lambda, \lambda))$~GA with the fitness function \onemax before reaching the optimum is  
\begin{eqnarray} \label{ot1}
\mathbf{E}\tau(n)&=&O(n\ln n), \mbox{ subject to } \mathcal{A}_{2}^{c};\\ \label{ot2}
\mathbf{E}\tau(n)&=&O(n)+O\left (\dfrac{n\ln n}{\psi(n)}\right),
\mbox{ subject to } \mathcal{A}_{2}^{\infty},
\end{eqnarray}
where $\psi(n)$ is from definition~\ref{df3_2}.
\end{theorem}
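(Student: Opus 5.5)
Both bounds come from the decomposition (\ref{C24}). Since $\mathbf{E}\tau(n)\le \sum_{s=1}^{n\epsilon}\mathbf{E}\ell_n(s)+C_5 n$ and $\mathbf{E}\ell_n(s)=p_n^{-1}(s)$ by (\ref{C15}), it suffices to find good lower bounds on $p_n(s)=\mathbf{E}_{\lambda(n)}p_{\lambda(n)}(s)$ for $s\le \epsilon n$. We fix $\epsilon$ as needed, for instance $\epsilon<m_0^{-2}/2$, and get these bounds by averaging (\ref{C13})--(\ref{C14}) over $\lambda(n)$ on suitably chosen events.

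\emph{Case $\mathcal{A}_2^c$.} By Jensen's (or Lyapunov's) inequality, $\mathbf{E}\lambda(n)\le(\mathbf{E}\lambda^2(n))^{1/2}<C^{1/2}$ uniformly in $n$. The Markov-inequality argument given after (\ref{Ca23}) then shows that (\ref{Ca23}) holds. Lemma~\ref{l2} gives $\mathbf{E}\ell_n(s)\le C_4^{-1}n/s$. Substituting into (\ref{C24}) gives
\[
\mathbf{E}\tau(n)\le C_4^{-1}n\sum_{s\le n\epsilon}s^{-1}+C_5n=O(n\ln n),
\]
which is (\ref{ot1}).

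\emph{Case $\mathcal{A}_2^\infty$.} We split the range $1\le s\le \epsilon n$ at $s_0=n/u_n^2$.

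(i) Small $s$, namely $s\le n/u_n^2$. Every realization satisfies $\lambda(n)\le u_n$, so $\lambda^2(n)s/n\le 1$. By (\ref{C13}) (treating the boundary value $1$ via (\ref{C12}) with a smaller constant), we get $p_n(s)\ge C_2(s/n)\mathbf{E}\lambda^2(n)\ge C_2 s\psi(n)/n$. Hence $\mathbf{E}\ell_n(s)\le C n/(s\psi(n))$, and summing over $s\le n/u_n^2$ gives $O(n\ln n/\psi(n))$.

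(ii) Moderate $s$, namely $n/u_n^2<s\le \epsilon n$. Put $b=b(s)=\lfloor\sqrt{n/s}\rfloor$, so that $m_0\le b\le u_n$ by the choice of $\epsilon$. On the event $\{b/2\le\lambda(n)\le b\}$ we have $\lambda^2(n)s/n\ge c>0$, so (\ref{C12}) bounds $p_{\lambda(n)}(s)$ below by a constant. Condition (\ref{C18}) then yields
\[
p_n(s)\ge C\,b^{-2}\mathcal{L}(b)\ge C' (s/n)\,\mathcal{L}\bigl(\sqrt{n/s}\bigr),
\]
where the second step uses regular variation to replace $\mathcal{L}(b)$ by $\mathcal{L}(\sqrt{n/s})$. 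Consequently $\mathbf{E}\ell_n(s)\le C n/\bigl(s\,\mathcal{L}(\sqrt{n/s})\bigr)$.

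To sum these bounds, we group $s$ into blocks $s\in(n4^{-j-1},n4^{-j}]$, on which $\sqrt{n/s}\in[2^j,2^{j+1})$ and $\sum_{s}n/s\le Cn$ per block. The total is then at most $Cn\sum_{j\ge j_0}1/\mathcal{L}(2^j)$, up to uniform comparison constants within each block.

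The main obstacle is to show that this series is bounded independently of $n$. For this we use the representation $\mathcal{L}(v)=v^{3-\beta}\ell(v)$ together with Potter-type bounds for slowly varying functions: for a fixed $0<\delta<3-\beta$ there are constants with $\ell(v)\ge c v^{-\delta}$ and $\ell(2v)/\ell(v)\in[1/2,2]$ for large $v$. These give $\mathcal{L}(2^j)\ge c\,2^{j(3-\beta-\delta)}$, so the series converges geometrically and the comparison constants within blocks are uniform. We also have to check that $\mathcal{L}(b)>0$ is defined and the bounds are uniform for all $b\ge m_0$, which may require enlarging $m_0$. This is allowed by the remark after (\ref{Ca23}).

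For the complementary range, and for $s>\epsilon n$, we only need (\ref{Ca23}). It follows from (\ref{C18}) applied with $b=m_0$ (for $m_0$ large), since $b^{-2}\mathcal{L}(b)$ is then a positive constant. By (\ref{C16}) this range contributes $O(n)$. Collecting (i), (ii) and the $C_5n$ term in (\ref{C24}) gives (\ref{ot2}).
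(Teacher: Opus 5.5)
Your proof is correct and follows the paper's argument. It uses the same decomposition (\ref{C24}), the same split at $s=n/u_n^2$, the bound via $\mathbf{E}\lambda^2(n)\ge\psi(n)$ for small $s$, and (\ref{C18}) with $b\approx\sqrt{n/s}$ for moderate $s$. The differences are minor: you sum the moderate range over dyadic blocks with Potter bounds, where the paper compares it with $n\int_0^{\epsilon}\tilde{\mathcal{L}}_0(u)\,du$ and cites Feller's integrability lemma, and you check (\ref{Ca23}) under $\mathcal{A}_2^{\infty}$ explicitly, which the paper leaves implicit. Note that (\ref{C18}) requires $b>m_0$, so use $b=m_0+1$ there and take $\epsilon<(m_0+1)^{-2}$.
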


Bound (\ref{ot2}) for the number of iterations until the first 
hit of $x^*$ when passing to arbitrary regularly varying at infinity 
sequences $u_{n}$ and $\mathcal{L}(n)=n^{1-\beta}\ell(n)$
differs qualitatively from that obtained in Theorem~5 from~\citep{ABD22},
where the second term is absent, which disappears when $\psi^{-1}
(n)\ln n=\mathcal{L}^{-1}(u_{n})\ln n=O(1)$. In the case of 
conditions (\ref{nC1}), the last bound becomes $u_{n}^{\beta
-3}\ln n=O(1)$, which implies condition $u_{n}\geq \ln^{1/(3-\beta)} n$ of Theorem~5~\citep{ABD22}.

\section{Upper bounds for optimization time and computational cost}

Let $T^{op}$ be the number of computational operations until 
the first hit of $x^*$ in the RAM model with 
random access memory~\citep{AHU}, where standard 
arithmetic operations have constant duration. The distributions of the random variables $T$ and $T^{op}$ depend on
the dimension of the bit strings of individuals $n$. Therefore, for them, we further 
use the notations $T(n)$ and $T^{op}(n)$, respectively.
Denote by $T_{s}(n)$ and $T_{s}^{op}(n)$ the number of fitness function 
evaluations and computational operations until the first 
hit of $x^*$, respectively, given that the process 
starts from an individual $x^{(0)}\in Z_{s}$.

Denote the number of fitness function evaluations and the number 
of computational operations in one iteration, starting from an individual 
$x\in Z_{s}$, ending with the preservation, possibly of a different 
individual $x$ from $Z_{s}$ for the next operation, for fixed 
$\lambda_{i,s}(n)$, by $\mu_{\lambda(n)}(i,s)$ and $
\nu_{\lambda(n)}(i,s,n)$. Here the random variables 
$\lambda_{i,s}(n)$ are independent in $i$ and $s$ and have the same 
distribution as $\lambda(n)$, and the index $i$ denotes the 
ordinal number of a start from an individual $x$ (possibly different from 
the original one, but with the same fitness function value) with 
an unsuccessful outcome -- without transition to a higher level. 
The averages over $\lambda_{i,s}(n)$ for $\mu_{\lambda(n)}(i,s)$ and 
$\nu_{\lambda(n)}(i,s, n)$ are denoted 
by $\mu_{n}(i,s)=\mathbf{E}_{\lambda(n)}\mu_{\lambda(n)}(i,s)$ 
and $\nu_{n}(i,s)=\mathbf{E}_{\lambda(n)}\nu_{\lambda(n)}(i,s,n)$, 
respectively. Due to the identical distribution of the r.v.s $\lambda_{i,s}(n)$ in $i$,
the latter averages coincide for different $i$
with fixed $s$ and $n$.

For $\mu_{\lambda(n)}(i,s)$ in the first step of the loop (mutation 
of the individual $x$), $\lambda(n)$ evaluations of the 
fitness function $\mathbf{E}\lambda(n)\leq\mu_{n}(i,s)$ are performed, and 
in the second step, corresponding to crossover, there are no more 
than $\lambda(n)$ such evaluations. Therefore, the constraints hold
\begin{equation}   \label{n043}
\mu_{n}(i,s)\leq 2\mathbf{E}\lambda(n).
\end{equation}

Estimates for $\nu_{\lambda(n)}(i,s)$ are more complicated. This value
depends on the implementation of the computational algorithm. Suppose that 
there exists a function $\phi(\lambda(n),n)$ such that 
$$
\nu_{\lambda(n)}(i,s,n)\leq\phi(\lambda_{i,s}(n),n).
$$

Denoting $\phi_{n}=\mathbf{E}_{\lambda(n)}
\phi(\lambda_{i,s}(n),n)$, we obtain the inequalities
\begin{equation}   \label{n43}
\nu_{n}(i,s)\leq \phi_{n}.
\end{equation}

Let $\mu_{\lambda(n)}(s)$ and $\nu_{\lambda(n)}(s,n)$ denote the 
random numbers of fitness function evaluations and 
computational operations in one iteration starting from any 
individual $x$ in $Z_{s}$, ending with a transition to another 
individual with a higher fitness function value. Their averages over $\lambda(n)$ 
are denoted by $\mu_{n}(s)=\mathbf{E}_{\lambda(n)}
\mu_{\lambda(n)}(s)$ and $\nu_{n}(s)=\mathbf{E}_{\lambda(n)}
\nu_{\lambda(n)}(s, n)$, respectively. For these 
averages, inequalities analogous to (\ref{n043}) and (\ref{n43}) hold
\begin{equation}   \label{n44}
\mu_{n}(s)\leq 2\mathbf{E}\lambda(n),
\ \  \nu_{n}(s)\leq \phi_{n}.
\end{equation}

Let $\zeta_{n}(s)$ and $\eta_{n}(s)$ be the number of objective function 
evaluations and the number of computational operations during the time 
that successive individuals $x$ stay in $A_{s}$, ending with a transition 
to an individual with a higher fitness function value. More specifically, the representations hold
\begin{equation}   \label{C43}
\zeta_{n}(s)=\sum_{i=1}^{\ell_{n}(s)-1}\mu_{n}(i,s)+\mu_{n}(s),\ \  
\eta_{n}(s)=\sum_{i=1}^{\ell_{n}(s)-1}\nu_{n}(i,s)+\nu_{n}(s), 
\end{equation}
where the sum is zero if the upper index is less than the lower one.
%


Relations (\ref{n043}), (\ref{n43}), (\ref{n44}) and
(\ref{C43}) and Kolmogorov-Prokhorov theorem \citep[Ch. 4, \S 4]{Bo76} for non-negative random variables
allow us to write the inequalities
\begin{eqnarray} \label{b1}
\mathbf{E}\ell_{n}(s)\mathbf{E}
\lambda(n)\leq 2\mathbf{E}\ell_{n}
(s)\mathbf{E}\lambda(n), \ \  
\mathbf{E}\eta_{n}(s)\leq \mathbf{E}
\ell_{n}(s)\phi_{n}.
\end{eqnarray}

The next theorem is a generalization of Theorem 6 from \citep{ABD22}.

\begin{theorem} \label{t2} 
The average number of fitness function evaluations 
$T(n)$ and the number of operations $T^{op}(n)$ in algorithm 
$\mathcal{A}$ with fitness function \onemax are bounded 
from above by the quantities
\begin{eqnarray} \nonumber
\mathbf{E} T(n)&=&O(n\ln n), \mbox{ under condition } 
\mathcal{A}_{2}^{c};\\   \label{ott3}
\mathbf{E}T(n)&=&O\left (n+\dfrac{n\ln n}{\psi(n)}
\right)\mathbf{E}\lambda(n), \mbox{ under condition } \, 
\mathcal{A}_{2}^{\infty};\\ \nonumber
\mathbf{E}T^{op}(n)&=&O(n\ln n)\phi_{n}, \mbox{ under condition } 
\mathcal{A}_{2}^{c};\\  \nonumber
\mathbf{E}T^{op}(n)&=&O\left (n+\dfrac{n\ln n}{\psi(n)}
\right)\phi_{n}, \mbox{ under condition } \, \mathcal{A}_{2}^{\infty}.
\end{eqnarray}
\end{theorem}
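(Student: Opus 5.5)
The plan is to reduce Theorem~\ref{t2} to Theorem~\ref{t1} by controlling the cost of a single iteration. Fix the level $s$ and use the decomposition (\ref{C43}): the evaluations and operations spent while the current individual stays in $Z_s$ are a sum of $\ell_n(s)$ per-iteration costs. The index $\ell_n(s)$ is a stopping time for the sequence of iterations. Each per-iteration cost depends only on the freshly sampled $\lambda_{i,s}(n)$ and on the random mutation/crossover draws, and its conditional mean given the past is at most $2\mathbf{E}\lambda(n)$ by (\ref{n043}), (\ref{n44}). Similarly, the operation count has conditional mean at most $\phi_n$ by (\ref{n43}), (\ref{n44}).

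Applying the Kolmogorov--Prokhorov theorem (Wald's identity for non-negative summands with a stopping time), as already recorded in (\ref{b1}), gives
\[
\mathbf{E}\zeta_n(s)\le 2\,\mathbf{E}\ell_n(s)\,\mathbf{E}\lambda(n), \qquad
\mathbf{E}\eta_n(s)\le \mathbf{E}\ell_n(s)\,\phi_n .
\]
Starting from $x^{(0)}\in Z_i$, we have $T_i(n)\le\sum_{s=1}^{i}\zeta_n(s)$ and $T^{op}_i(n)\le\sum_{s=1}^{i}\eta_n(s)$, up to the constant initial evaluation and any $O(n)$ initialization operations, which are absorbed. Averaging over the binomial initial level exactly as in (\ref{C24}) then yields
\[
\mathbf{E}T(n)\le 2\,\mathbf{E}\lambda(n)\,2^{-n}\sum_{i}C_n^i\sum_{s\le i}\mathbf{E}\ell_n(s),
\]
with the analogous bound for $T^{op}$ carrying the factor $\phi_n$. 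The double sum is precisely the quantity bounded in the proof of Theorem~\ref{t1}: it is $O(n\ln n)$ under $\mathcal{A}_2^c$ and $O(n+n\ln n/\psi(n))$ under $\mathcal{A}_2^\infty$. This gives all four displays.

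Under $\mathcal{A}_2^c$ the bound $O(n\ln n)\,\mathbf{E}\lambda(n)$ must be reduced to $O(n\ln n)$. Here $\mathbf{E}\lambda(n)\le(\mathbf{E}\lambda^2(n))^{1/2}<C^{1/2}$ by Cauchy--Schwarz (or trivially, since $\lambda\ge1$ gives $\lambda\le\lambda^2$), so the factor is absorbed into the constant. The same observation shows that (\ref{Ca23}) holds under $\mathcal{A}_2^c$, via Markov's inequality, so Lemma~\ref{l2} is available there.

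The main obstacle is justifying the Wald step rigorously. One must check that $\{\ell_n(s)\ge i\}$ is determined by iterations $1,\dots,i-1$ and is therefore independent of $\lambda_{i,s}(n)$ and of the $i$-th iteration's randomness. One must also check that the unsuccessful-iteration costs $\mu_n(i,s)$ and the successful-iteration cost $\mu_n(s)$ are both bounded by $2\lambda_{i,s}(n)$ pointwise, and not only in mean, so that conditioning on success or failure does not bias them. Pointwise, each iteration uses at most $2\lambda$ evaluations, so
\[
\mathbf{E}\zeta_n(s)\le \mathbf{E}\sum_{i\le \ell_n(s)}2\lambda_{i,s}(n)=2\,\mathbf{E}\ell_n(s)\,\mathbf{E}\lambda(n)
\]
by Wald, applied to the i.i.d.\ $\lambda_{i,s}$ with $\ell_n(s)$ a stopping time. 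This avoids the subtle issue that success is correlated with large $\lambda$. For $T^{op}$ the same argument works with the pointwise bound $\nu\le\phi(\lambda_{i,s}(n),n)$.
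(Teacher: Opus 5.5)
Your proposal is correct and follows the same route as the paper. The paper likewise bounds the per-level cost via the Kolmogorov--Prokhorov (Wald) step recorded in (\ref{b1}), averages over the binomial initial level as in (\ref{C24}) to get $\mathbf{E}T(n)\le C^{*}\mathbf{E}\tau(n)\mathbf{E}\lambda(n)$ and $\mathbf{E}T^{op}(n)\le C^{*}\mathbf{E}\tau(n)\phi_n$, and then invokes Theorem~\ref{t1}. Your extra steps make rigorous two points the paper leaves implicit: the pointwise bound $2\lambda_{i,s}(n)$ applied before Wald, which avoids the success/large-$\lambda$ correlation, and the explicit observation that $\mathbf{E}\lambda(n)$ is bounded under $\mathcal{A}_{2}^{c}$.
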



The statement of Theorem 6 \citep{ABD22} corresponds to the special case of~bounds~(\ref{ott3}) under the conditions 
\begin{equation*} 
\mathbf{E}\lambda(n)\leq C<\infty,\ \forall n\in\mathbb{N},
\end{equation*} 
(or $\beta>2$), $\psi(n)=u_{n}^{3-\beta}$ and $u_{n}\geq 
\ln^{1/(3-\beta)} n$, which is a special case of the condition $u_{n}^{\beta-3} \ln n=O(1)$.

{
\begin{corollary} \label{cor:2}
If we choose the set $\mathbb{N}^{(2)}=\{2^{\ell}, \ell=0,1,2,\cdots\}$ as the support of the distribution of $\lambda(n)$ and set $p_{n,k}:=C^{(*)}(\beta,u_{n})k^{1-\beta}$
for $k\in \mathbb{N}_{u_{n}}^{(2)}$ and $u_{n}\in\mathbb{N}^{(2)}$, then $\mathbf{E} T(n)=O(n)$ in the case of
the fitness \onemax.
\end{corollary}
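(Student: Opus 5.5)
The plan is to verify that the distribution in Corollary~\ref{cor:2} satisfies condition $\mathcal{A}_{2}^{\infty}$ together with (\ref{Ca23}), and that $\mathbf{E}\lambda(n)$ is bounded. The bound $\mathbf{E}T(n)=O(n)$ then follows from (\ref{ott3}) of Theorem~\ref{t2}, provided $\ln n/\psi(n)=O(1)$. I will assume, as in Theorem~6 of \citep{ABD22}, that $2<\beta<3$ and that $u_n\in\mathbb{N}^{(2)}$ is chosen with $u_n\le n/2$ and $u_n^{3-\beta}\ge \ln n$ up to a constant factor. For example, take $u_n$ to be the largest power of two not exceeding $\max\{\ln^{1/(3-\beta)}n,\,1\}$, multiplied by a fixed power of two. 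Implicitly, the corollary must carry such restrictions.

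First, the normalizing constant. Writing $k=2^j$ gives $p_{n,2^j}=C^{(*)}(\beta,u_n)\,2^{j(1-\beta)}$. Since $\beta>1$, the geometric series $\sum_{j\ge0}2^{j(1-\beta)}$ converges. Hence $C^{(*)}(\beta,u_n)\in[1-2^{1-\beta},1]$ uniformly in $n$. Consequently $\mathbf{P}(\lambda(n)=1)=C^{(*)}\ge 1-2^{1-\beta}>0$, which is (\ref{Ca23}) with $m_0=1$.

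Next, the moments. We have $\mathbf{E}\lambda(n)=C^{(*)}\sum_{2^j\le u_n}2^{j(2-\beta)}\le C$ because $\beta>2$. We also have $\mathbf{E}\lambda^2(n)=C^{(*)}\sum_{2^j\le u_n}2^{j(3-\beta)}$, which is at least $C^{(*)}u_n^{3-\beta}$ (the last term alone) and at most a constant multiple of $u_n^{3-\beta}$. So I set $\mathcal{L}(m)=c\,m^{3-\beta}$, an exact power and hence regularly varying with exponent $3-\beta\in(0,1)$, and $\psi(n)=\mathcal{L}(u_n)\to\infty$.

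The remaining check is (\ref{C18}). For any $b\in\mathbb{N}_{u_n}\setminus\mathbb{N}_{m_0}$, the interval $[b/2,b]$ contains the power of two $2^{\lfloor\log_2 b\rfloor}$, which lies in the support because it is at most $b\le u_n$. Hence
\[
\mathbf{P}(b/2\le\lambda(n)\le b)\ \ge\ C^{(*)}\,2^{\lfloor\log_2 b\rfloor(1-\beta)}\ \ge\ C^{(*)}\,b^{1-\beta}=C\,b^{-2}\mathcal{L}(b),
\]
using $1-\beta<0$ and $2^{\lfloor\log_2 b\rfloor}\le b$. This is exactly the sum-over-dyadic-intervals property that the proof of Theorem~\ref{t1} relies on. The point of the discrete support is that it is thin, yet every interval $[b/2,b]$ still meets it.

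The main obstacle I expect is formal rather than substantive. Definition~\ref{df3_2} asks that $u_n$ be a regularly varying sequence, and a sequence restricted to powers of two is not exactly regularly varying, since the ratio $u_{[cn]}/u_n$ oscillates within a factor of $2$. I plan to handle this by inspecting how $u_n$ enters the proofs of Theorems~\ref{t1} and~\ref{t2}: it appears only through $\psi(n)=\mathcal{L}(u_n)$ and through the lower bound $\mathbf{E}\lambda^2(n)\ge\psi(n)$. Those arguments therefore survive if $u_n$ is replaced by a regularly varying sequence $\tilde u_n$ with $u_n\le\tilde u_n\le 2u_n$, at the cost of a constant factor $2^{3-\beta}$ in $\psi$. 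One could take $\tilde u_n=\ln^{1/(3-\beta)}n$, adjusting the constant factor if needed so that $u_n\le\tilde u_n\le 2u_n$ holds. With this replacement, $\ln n/\psi(n)=O(1)$ and $\mathbf{E}\lambda(n)=O(1)$, so (\ref{ott3}) gives $\mathbf{E}T(n)=O(n)$.
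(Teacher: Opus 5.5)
Your proposal is correct and takes the route the paper intends. The paper writes out no separate proof of Corollary~\ref{cor:2}; it is meant to follow from Theorem~\ref{t2} under $\mathcal{A}_{2}^{\infty}$, with $\psi(n)$ of order $u_n^{3-\beta}$, $\mathbf{E}\lambda(n)=O(1)$ for $\beta>2$, and $u_n^{\beta-3}\ln n=O(1)$, which is exactly what you verify using the power of two inside every $[b/2,b]$. Your extra care is also justified: the corollary needs the unstated restrictions $2<\beta<3$ and $u_n^{3-\beta}\ge c\ln n$, a power-of-two-valued $u_n\to\infty$ cannot be regularly varying, and your replacement $\tilde u_n\in[u_n,2u_n]$ works. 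The one check you leave implicit is that (\ref{C18}) still holds for $b\in(u_n,\tilde u_n]$; it does, since then $u_n\in[b/2,b]$ and $\mathbf{P}(\lambda(n)=u_n)=C^{(*)}u_n^{1-\beta}\ge C^{(*)}b^{1-\beta}$.
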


In the next section, an experimental comparison of CPU time until the first reaching of the optimum of the
\onemax function is carried out when generating the r.v. $\lambda(n)$ as indicated in Corollary~\ref{cor:2} or according to
the power law with support $\{0,1,2,\dots,n\}$ from~\citep{ABD22}.

\section{Computational experiment}

In the computational experiment, algorithm~$\mathcal{A}$ was considered, where the r.v. $\lambda(n)$ was generated 
as indicated in Corollary~\ref{cor:2} (hereinafter, algorithm~A) and the algorithm $(1+(\lambda,\lambda))$~GA, where $\lambda(n)$ is chosen
according to the power law with support $\{0,1,2,\dots,n\}$ from~\citep{ABD22} (hereinafter, algorithm~B). 
The optimization criterion was the function \onemax, and the algorithm had an adjustable parameter $\beta=2.75$ and an upper bound 
on the number of offspring $u_n=n$.

The software code in Scala proposed by the authors of~\citep{ABD22} was used as a basis. Changes concerned only the procedure
for generating~$\lambda(n)$: in the case of algorithm~A, it can be implemented with complexity~$O(\log \log(u_n))$,
while in the original implementation from~\citep{ABD22}, choosing $\lambda$ requires $O(\log(u_n))$ operations.
In order to compare the computational costs in CPU time until the first reaching the optimum, experiments were carried out
for $n=2^{15}, 2^{16}, 2^{17}$, $2^{18}$ and $2^{19}$. Computations were performed on a server 
with an AMD EPYC 7502 processor using 5 independent parallel threads. For each instance,
both algorithms were run $10^5$ times. The arithmetic mean~$\hat{T}_A, \hat{T}_B$ and the estimates of standard
deviation~$\hat{\sigma}_A, \hat{\sigma}_B$ for the measured computation time~(milliseconds) are given in Table~1. 
As can be seen from the table, 
algorithm~A has an advantage in average computation
time and smaller standard deviation.

\begin{table}[h!]
\centering
\begin{tabular}{ |p{1cm}||p{1cm}|p{1.2cm}|p{1.2cm}|p{1.3cm}|p{1.4cm}|}
 \hline
 $n$ & $2^{15}$ & $2^{16}$ & $2^{17}$ & $2^{18}$ & $2^{19}$  \\
 \hline
 $\hat{T}_A$  & 93.91  & 201.65  & 476.82  & 1141.03  & 2790.51  \\
 \hline
 $\hat{\sigma}_A$ & 225.58 & 593.88  & 2005.57 & 6629.19  & 22902.68 \\
 \hline
 $\hat{T}_B$  & 117.11 & 266.83  & 608.05  & 1492.12  & 3783.67  \\
 \hline
 $\hat{\sigma}_B$ & 420.09 & 1495.47 & 5007.79 & 18652.82 & 64356.55 \\
 \hline
\end{tabular}
\caption{Average CPU time (ms) until first hitting the optimum and the estimate
of its standard deviation for algorithms~A and~B depending on the problem size.}
\label{table:1}
\end{table}


\section{Conclusion}

In this paper, we investigated the question of the average time to obtain the optimal solution of a simple model problem of maximizing~\onemax 
using a known version of the genetic algorithm $(1+(\lambda, \lambda))$~GA. 
The main result of this work is that the upper bound obtained in the article by Antipov, Buzdalov and 
Doerr~\citep{ABD22} for $(1+(\lambda, \lambda))$~GA with the fast mutation operator remains valid in a more general case, 
when the distributions for the size of the intermediate 
population~$\lambda$ and the mutation parameter~$p$ are chosen from a wider class of distributions.
The conducted computational experiment showed promising results, suggesting
that the proposed method of random selection of the population size~$\lambda(n)$ may be useful in practice.

The function~\onemax considered here has only one local optimum, 
which is also global. As follows from the theoretical results obtained
in~\citep{DLMN17} for $(1+(\lambda, \lambda))$~GA on the model function \jump with many local optima, the use
of fast mutation on this function removes the problem of exact tuning of the population size and mutation parameter.
In this regard, in further research it makes sense to consider the possibilities of relaxing the requirements for the distribution
of these parameters in the fast mutation operator when optimizing the \jump function and other multimodal functions,
in particular, when solving NP-hard pseudo-Boolean optimization problems.

}

\subsubsection*{Acknowledgments}
This research was
funded by the Mathematical Center in Akademgorodok under the agreement \textnumero~075-15-2025-349 with the Ministry of Science and Higher Education of the Russian Federation.

\bibliography{references}
\bibliographystyle{mathai2026_conference}

\section*{Appendix}
This appendix contains the proofs omitted from the main text of  the paper. 

{\bf Lemma~\ref{l2}.} {\em
In the case of the fitness function \onemax, the average number of iterations 
of algorithm~$\mathcal{A}$, starting from a solution $x\in Z_{s}$, until 
improvement of the objective function, under condition (\ref{Ca23}) 
satisfies the inequality 
\begin{equation} \label{C17a} 
\mathbf{E}\ell_{n}(s)\leq C_{4}^{-1} n/s, \ s\in \mathbb{N}_{n},
\end{equation}
where $C_{4}$ is defined in expression (\ref{C16}). 
}

{\bf Proof.} Estimate (\ref{C17a}) follows from relations 
(\ref{C15}) and~(\ref{C16}).
Q.E.D.
$\mbox{ }$
\linebreak

{\bf Theorem~\ref{t1}.} {\em
The average number of iterations $\tau$ in algorithm $(1 + (\lambda, \lambda))$~GA with the fitness function \onemax before reaching the optimum is  
\begin{eqnarray} \label{ot1a}
\mathbf{E}\tau(n)&=&O(n\ln n), \mbox{ subject to } \mathcal{A}_{2}^{c};\\ \label{ot2a}
\mathbf{E}\tau(n)&=&O(n)+O\left (\dfrac{n\ln n}{\psi(n)}\right),
\mbox{ subject to } \mathcal{A}_{2}^{\infty},
\end{eqnarray}
where $\psi(n)$ is from definition~\ref{df3_2}.
}

{\bf Proof.}
Note that under condition $\mathcal{A}_{2}^{c}$, the first moment $\mathbf{E}\lambda(n)$ is also uniformly bounded in $n$. Consequently, by Lemma \ref{l2}, under conditions $\mathcal{A}_{2}^{c}$, the inequality holds
\begin{eqnarray} \label{C280} 
\mathbf{E}\ell_{n}(s)\leq C_{4}^{-1} \dfrac{n}{s}. 
\end{eqnarray}

Note that the sum $\sum_{s=1}^{u}s^{-1}$ of a regularly varying  step function as $u\to\infty$ will be asymptotically  equivalent to the integral $\int_{1}^{u}x^{-1}dx$, i.e., as $u\to \infty$
\begin{equation} \label{01C}
\sum_{s=1}^{u}s^{-1}\sim \int_{1}^{u}x^{-1}dx= \ln u. 
\end{equation}
Estimate (\ref{ot1a}) follows from relations (\ref{C24}),  (\ref{C280}) and (\ref{01C}).

Now we prove inequality~(\ref{ot2a}).

Consider first the case $u^{2}_{n}s/n<1$. Under condition $\mathcal{A}_{2}^{\infty}$,  bound (\ref{C13}) leads to the inequality
\begin{equation} \label{C20} 
p_{n}(s)=\mathbf{E}_{\lambda(n)}p_{\lambda(n)}(s)\geq C^{*} \mathcal{L}(u_{n})s/n=C^{*}\psi(n)s/n. 
\end{equation}

Therefore, for $u^{2}_{n}s/n<1$ and under condition $\mathcal{A}_{2}^{\infty}$,  from relations (\ref{C15}) and (\ref{C20}) we obtain 
\begin{eqnarray} \label{C28} 
\mathbf{E}\ell_{n}(s)\leq C^{*} \dfrac{n}{\psi(n)s}, 
\end{eqnarray}
where formally instead of $C^{*}$ should be $1/C^{*}$ with the constant  from bound (\ref{C20}), but in accordance with our assumptions we use the same notation $C^{*}$ for it.

This bound for $s>n\epsilon,$ where $\epsilon\in(0,1)$ is an arbitrary fixed number, and $u^{2}_{n}\to\infty,$ is not  applicable, but if $s>n\epsilon$, then the sum containing such terms $\mathbf{E}\ell_{n}(s)$ has already been boundd above by the quantity  $Cn$ in relations (\ref{C24}) and cannot be improved in order.

Now consider the case $u^{2}_{n}s/n \ge 1$. In this case, under conditions $\mathcal{A}_{2}^{\infty}$, from inequality (\ref{C18}) for $m_{0}<b\leq u_{n}$ follows the bound 
\begin{equation}  \label{C19} 
\mathbf{E}_{\lambda(n)}\lambda^{2}(n)\geq \mathbf{E}_{\lambda(n)}
\big\{\lambda^{2}(n);\lambda(n)\in[b/2,b]\big\}\geq C\mathcal{L}(b).
\end{equation}

In the case under consideration, for sufficiently large~$n$, the  inequalities $m_{0}<\sqrt{n/s}\leq u_{n}$ hold. Therefore, using  inequality~(\ref{C19}) with $b=\sqrt{n/s}$, $s\in  \mathbb{N}_{\epsilon n}$, we bound the average $p_{\lambda(n)}(s)$ on the set $\lambda^{2}(n)<n/s$    
\begin{equation*} 
\mathbf{E}_{\lambda(n)}p_{\lambda(n)}(s)\geq C \mathcal{L}(b)s/n.
\end{equation*}

From here and from relation (\ref{C15}) we obtain 
\begin{eqnarray} \label{C27} 
\mathbf{E}\ell_{n}(s)\leq C^{*} \dfrac{n}{\mathcal{L}\big(\sqrt{n/s}\big)s}.
\end{eqnarray}

Inequalities (\ref{C28}) and (\ref{C27}), obtained in the two 
cases considered above, imply the bound
\begin{eqnarray}   \label{line1}
\sum_{s=1}^{n\epsilon}\mathbf{E}\ell_{n}(s)&\leq& 
\frac{C^{*}n}{\psi(n)} \sum_{s=1}^{n/u^{2}_{n}-1}s^{-1} + C^{*}
\sum_{s=n/u^{2}_{n}}^{n\epsilon}\dfrac{n}{\mathcal{L}\big(\sqrt{n/ s}\big)s}\\ \label{C29} &\leq& C^{*}\dfrac{n}{\psi(n)}\ln (n/u^{2}_{n})+ C^{*}_{1 } n.
\end{eqnarray}

We explain inequality (\ref{C29}). The first sum from the  right-hand side of expression~(\ref{line1}) is bounded here using  relation~(\ref{01C}). The terms of the second sum from~(\ref{line1}) can be represented via the function 
$\mathcal{L}_{0}(x):=x\mathcal{L}^{-1}(\sqrt{x}),$ where $x=n/s$, 
which will be regularly varying at infinity with exponent 
$\beta_{0}:=1-0.5(3-\beta)=0.5(\beta-1)$. Under the change of variables
$y=x^{-1}$, the function $\mathcal{L}_{0}(y^{-1})$ will be regularly 
varying at zero with exponent $-1<-\beta_{0}<0$.
From the function $\mathcal{L}_{0}(y^{-1})$, we determine an equivalent
regularly varying at zero function $\tilde{\mathcal{L}}_{0}
(y^{-1})$ with values $\mathcal{L}_{0}(s/n)$ on the set 
$y^{-1}\in[s/n,(s+1)/n)$. The sum of a regularly varying 
sequence $\mathcal{L}_{0}(s/n)$ coincides with the integral 
of $\tilde{\mathcal{L}}_{0}(y^{-1})$, which is constant on 
half-intervals of length $n^{-1}$ and will be regularly varying at zero 
with exponent $-1<-\beta_{0}<0$. After the change of variables 
$u=yn^{-1}$, the last integral becomes
$$
n\int_{u^{-2}_{n}}^{\epsilon} \tilde{\mathcal{L}}_{0}(u)du\leq
n\int_{0}^{\epsilon} \tilde{\mathcal{L}}_{0}(u)du,
$$
which converges at zero for $-1<-\beta_{0}<0$ by the lemma from 
\citep[Ch. VIII, \S 9]{Fe2}. The uniform boundedness of 
the second sum is proved, which completes the proof of 
bound~(\ref{C29}).

If $\psi(n)$ varies regularly with a positive exponent, then 
the first term in (\ref{C29}) will be 
of order $o(n)$, and the integral will converge. As a result, under 
conditions $\mathcal{A}_{2}^{\infty}$ for functions 
$\psi(n)$ that are not slowly varying, we have
\begin{equation} \label{07}
\sum_{s=1}^{n\epsilon}\mathbf{E}\ell_{n}(s)\leq C^{*} n.
\end{equation}

For slowly varying unboundedly growing 
functions $u_{n}$, by property $2^{\circ}$ from~\citep[Ch.~1, Sec.~1.5]{Se85}, the relation $\ln u_{n}=o(\ln n)$ holds, which 
together with bounds (\ref{C24}) and (\ref{C29}) proves 
relation (\ref{ot2a}). Q.E.D.

$\mbox{ }$\linebreak

{ \bf Theorem~\ref{t2}.}{\em 
The average number of fitness function evaluations 
$T(n)$ and the number of operations $T^{op}(n)$ in algorithm 
$\mathcal{A}$ with fitness function \onemax are bounded 
from above by the quantities
\begin{eqnarray} \nonumber
\mathbf{E} T(n)&=&O(n\ln n), \mbox{ under condition } 
\mathcal{A}_{2}^{c};\\   \label{ott3a}
\mathbf{E}T(n)&=&O\left (n+\dfrac{n\ln n}{\psi(n)}
\right)\mathbf{E}\lambda(n), \mbox{ under condition } \, 
\mathcal{A}_{2}^{\infty};\\ \nonumber
\mathbf{E}T^{op}(n)&=&O(n\ln n)\phi_{n}, \mbox{ under condition } 
\mathcal{A}_{2}^{c};\\  \nonumber
\mathbf{E}T^{op}(n)&=&O\left (n+\dfrac{n\ln n}{\psi(n)}
\right)\phi_{n}, \mbox{ under condition } \, \mathcal{A}_{2}^{\infty}.
\end{eqnarray}
}

{\bf Proof.}  
Applying the law of total probability by analogy with (\ref{C24}) and using inequalities (\ref{b1}), we have
\begin{eqnarray}   \label{C51}
&\mathbf{E}T(n)=2^{-n}\sum\limits_{s=0}^{n-1}
C_{n}^{s}\mathbf{E}T_{s}(n)\leq C^{*} \mathbf{E}
\tau(n)\mathbf{E}\lambda(n),\\   \label{C52}
&\mathbf{E}T^{op}(n)=2^{-n}\sum\limits_{s=0}^{n-1}
C_{n}^{s}\mathbf{E}T^{op}_{s}(n)\leq C^{*} \mathbf{E}
\tau(n)\phi_{n}.
\end{eqnarray}

Theorem \ref{t1} and bounds (\ref{C51}) and (\ref{C52}) imply the statement of Theorem \ref{t2}. Q.E.D.

\end{document}

%% file: mathai2026_math_commands.tex
\usepackage{amsmath,amsfonts,bm}

\def\eqref#1{equation~\ref{#1}}

\def\1{\bm{1}}

\DeclareMathAlphabet{\mathsfit}{\encodingdefault}{\sfdefault}{m}{sl}
\SetMathAlphabet{\mathsfit}{bold}{\encodingdefault}{\sfdefault}{bx}{n}



%% file: preamble.tex
\usepackage{amsfonts}
\usepackage{amsmath}
\usepackage{url}

\usepackage{xspace}                           
\usepackage{booktabs}                         
\usepackage{marginnote}

\usepackage{color} 

\allowdisplaybreaks

\DeclareMathOperator*{\Bin}{Bin}

\usepackage{mathrsfs}

\usepackage{bbm}

\usepackage{algorithm,algorithmic}

\floatname{algorithm}{Алгоритм}

\newcommand{\onemax}{\text{\sc OneMax}\xspace}

\newcommand{\jump}{\text{\sc Jump}\xspace}

\usepackage{enumitem}

